\newtheorem{proposition}{Proposition}
\newenvironment{shrinkeq}[1]
{\bgroup
  \addtolength\abovedisplayshortskip{#1} 
  \addtolength\abovedisplayskip{#1}
  \addtolength\belowdisplayshortskip{#1}
  \addtolength\belowdisplayskip{#1}}
{\egroup\ignorespacesafterend}
\title{Implicit Semantic Data Augmentation for Deep Networks}
\author{%
  Yulin Wang$^{1}$\thanks{Equal contribution.}\ \ \ \ 
  Xuran Pan$^{1*}$\ \ \ 
  Shiji Song$^{1}$\ \ \ 
  Hong Zhang$^{2}$\ \ \ 
  Cheng Wu$^{1}$\ \ \ 
  Gao Huang$^{1}$\thanks{Corresponding author.}\\
    $^{1}$Department of Automation, Tsinghua University, Beijing, China\\
    Beijing National Research Center for Information Science and Technology (BNRist),\\
    $^{2}$Baidu Inc., China\\
  \texttt{\{yulin.bh, fykalviny\}@gmail.com, pxr18@mails.tsinghua.edu.cn,} \\
  \texttt{\{shijis, wuc, gaohuang\}@tsinghua.edu.cn}
}
\begin{document}

\maketitle

\begin{abstract}
  In this paper, we propose a novel \emph{implicit semantic data augmentation (ISDA)} approach to complement traditional augmentation techniques like flipping, translation or rotation. Our work is motivated by the intriguing property that deep networks are surprisingly good at linearizing features, such that certain directions in the deep feature space correspond to meaningful semantic transformations, e.g., adding sunglasses or changing backgrounds. As a consequence, translating training samples along many semantic directions in the feature space can effectively augment the dataset to improve generalization. To implement this idea effectively and efficiently, we first perform an online estimate of the covariance matrix of deep features for each class, which captures the intra-class semantic variations. Then random vectors are drawn from a zero-mean normal distribution with the estimated covariance to augment the training data in that class. Importantly, instead of augmenting the samples explicitly, we can directly minimize an upper bound of the \emph{expected} cross-entropy (CE) loss on the augmented training set, leading to a highly efficient algorithm. In fact, we show that the proposed ISDA amounts to minimizing a novel robust CE loss, which adds negligible extra computational cost to a normal training procedure. Although being simple, ISDA consistently improves the generalization performance of popular deep models (ResNets and DenseNets) on a variety of datasets, e.g., CIFAR-10, CIFAR-100 and ImageNet. Code for reproducing our results is available at \color{blue}{\textit{https://github.com/blackfeather-wang/ISDA-for-Deep-Networks}}.
\end{abstract}

\vspace{-1ex}
\section{Introduction}
\vspace{-1.5ex}

Data augmentation is an effective technique to alleviate the overfitting problem in training deep networks \cite{krizhevsky2009learning,krizhevsky2012imagenet,2014arXiv1409.1556S,He_2016_CVPR,2016arXiv160806993H}.
In the context of image recognition, this usually corresponds to applying content preserving transformations, e.g., cropping, horizontal mirroring, rotation and color jittering, on the input samples. 
Although being effective, these augmentation techniques are not capable of performing semantic transformations, such as changing the background of an object or the texture of a foreground object. Recent work has shown that data augmentation can be more powerful if (class identity preserving) semantic transformations are allowed \cite{NIPS2017_6916, bowles2018gan, antoniou2017data}. For example, by training a generative adversarial network (GAN) for each class in the training set, one could then sample an infinite number of samples from the generator. Unfortunately, this procedure is computationally intensive because training generative models and inferring them to obtain augmented samples are both nontrivial tasks. Moreover, due to the extra augmented data, the training procedure is also likely to be prolonged.


In this paper, we propose an implicit semantic data augmentation (ISDA) algorithm for training deep image recognition networks. The ISDA is highly efficient as it does not require training/inferring auxiliary networks or explicitly generating extra training samples. Our approach is motivated by the intriguing observation made by recent work showing that the features deep in a network are usually linearized \cite{Upchurch2017DeepFI, bengio2013better}. Specifically, there exist many semantic directions in the deep feature space, such that translating a data sample in the feature space along one of these directions results in a feature representation corresponding to another sample with the same class identity but different semantics. For example, a certain direction corresponds to the semantic translation of "make-bespectacled". When the feature of a person, who does not wear glasses, is translated along this direction, the new feature may correspond to the same person but with glasses (The new image can be explicitly reconstructed using proper algorithms as shown in \cite{Upchurch2017DeepFI}). Therefore, by searching for many such semantic directions, we can effectively augment the training set in a way complementary to traditional data augmenting techniques.

However, explicitly finding semantic directions is not a trivial task, which usually requires extensive human annotations \cite{Upchurch2017DeepFI}. In contrast, sampling directions randomly is efficient but may result in meaningless transformations. For example, it makes no sense to apply the "make-bespectacled" transformation to the ``car'' class. In this paper, we adopt a simple method that achieves a good balance between effectiveness and efficiency. In specific, we perform an online estimate of the covariance matrix of the features for \emph{each} class, which captures the intra-class variations. Then we sample directions from a zero-mean multi-variate normal distribution with the estimated covariance, and apply them to the features of training samples in that class to augment the dataset. In this way, the chance of generating meaningless semantic transformations can be significantly reduced. 

To further improve the efficiency, we derive a closed-form upper bound of the \emph{expected} cross-entropy (CE) loss with the proposed data augmentation scheme. Therefore, instead of performing the augmentation procedure explicitly, we can directly minimize the upper bound, which is, in fact, a novel robust loss function. As there is no need to generate explicit data samples, we call our algorithm \emph{implicit semantic data augmentation (ISDA)}. Compared to existing semantic data augmentation algorithms, the proposed ISDA can be conveniently implemented on top of most deep models without introducing auxiliary models or noticeable extra computational cost.

Although being simple, the proposed ISDA algorithm is surprisingly effective, and complements existing non-semantic data augmentation techniques quite well. Extensive empirical evaluations on several competitive image classification benchmarks show that ISDA consistently improves the generalization performance of popular deep networks, especially with little training data and powerful traditional augmentation techniques.





\begin{figure*}[t]
    \begin{center}
    \centerline{\includegraphics[width=\columnwidth]{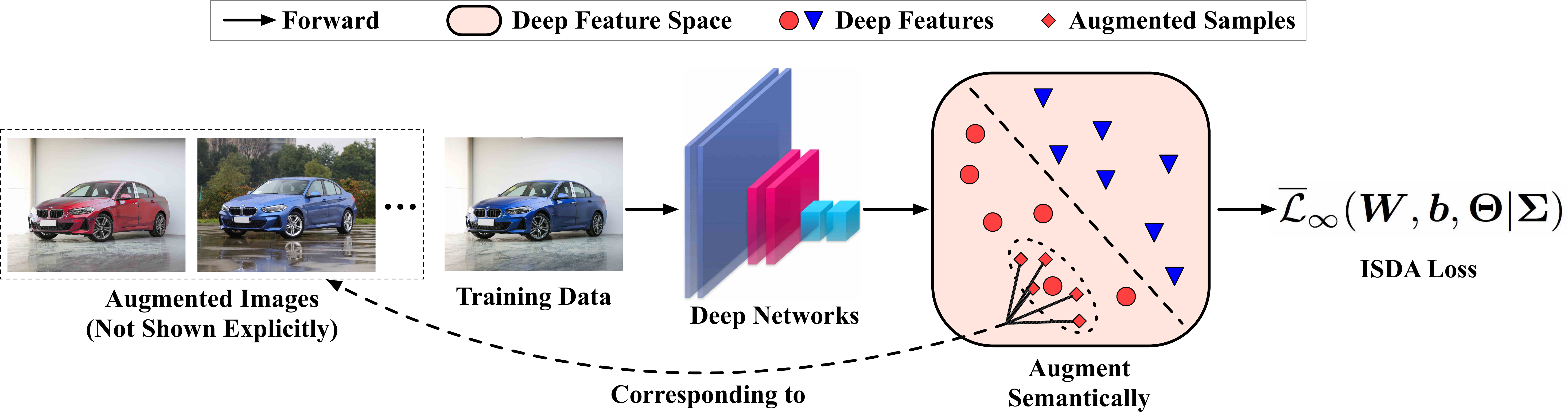}}
    \vspace{-1.5ex}
    \caption{An overview of ISDA. 
    Inspired by the observation that certain directions in the feature space correspond to meaningful semantic transformations, we augment the training data semantically by translating their features along these semantic directions, without involving auxiliary deep networks. The directions are obtained by sampling random vectors from a zero-mean normal distribution with dynamically estimated class-conditional covariance matrices. 
    In addition, instead of performing augmentation explicitly, ISDA boils down to minimizing a closed-form upper-bound of the expected cross-entropy loss on the augmented training set, which makes our method highly efficient. 
    }
    \label{overview}
    \end{center}
    \vspace{-6ex}
\end{figure*}

\vspace{-2ex}
\section{Related Work}
\vspace{-2ex}

In this section, we briefly review existing research on related topics.

\textbf{Data augmentation} is a widely used technique to alleviate overfitting in training deep networks.
For example, in image recognition tasks, data augmentation techniques like random flipping, mirroring and rotation are applied to enforce certain invariance in convolutional networks \cite{He_2016_CVPR, 2016arXiv160806993H, 2014arXiv1409.1556S, srivastava2015training}.
Recently, automatic data augmentation techniques, e.g., AutoAugment \cite{2018arXiv180509501C}, are proposed to search for a better augmentation strategy among a large pool of candidates.
Similar to our method, learning with marginalized corrupted features \cite{maaten2013learning} can be viewed as an implicit data augmentation technique, but it is limited to simple linear models.
Complementarily, recent research shows that semantic data augmentation techniques which apply class identity preserving transformations (e.g. changing backgrounds of objects or varying visual angles) to the training data are effective as well \cite{jaderberg2016reading, bousmalis2017unsupervised, NIPS2017_6916, antoniou2017data}. This is usually achieved by generating extra semantically transformed training samples with specialized deep structures such as DAGAN \cite{antoniou2017data}, domain adaptation networks \cite{bousmalis2017unsupervised} or other GAN-based generators \cite{jaderberg2016reading, NIPS2017_6916}. Although being effective, these approaches are nontrivial to implement and computationally expensive, due to the need to train generative models beforehand and infer them during training. 


\textbf{Robust loss function. }
As shown in the paper, ISDA amounts to minimizing a novel robust loss function. Therefore, we give a brief review of related work on this topic. 
Recently, several robust loss functions are proposed for deep learning. For example, the L$_q$ loss \cite{Zhang2018GeneralizedCE} is a balanced noise-robust form for the cross entropy (CE) loss and mean absolute error (MAE) loss, derived from the negative Box-Cox transformation. Focal loss \cite{Lin2017FocalLF} attaches high weights to a sparse set of hard examples to prevent the vast number of easy samples from dominating the training of the network. The idea of introducing large margin for CE loss has been proposed in \cite{liu2016large, Liang2017SoftMarginSF, Wang2018EnsembleSS}.
In \cite{Sun2014DeepLF}, the CE loss and the contrastive loss are combined to learn more discriminative features. From a similar perspective, center loss \cite{wen2016discriminative} simultaneously learns a center for deep features of each class and penalizes the distances between the samples and their corresponding class centers in the feature space, enhancing the intra-class compactness and inter-class separability.

\textbf{Semantic transformations in deep feature space. }
Our work is motivated by the fact that high-level representations learned by deep convolutional networks can potentially capture abstractions with semantics \cite{Bengio2009DeepArchitechture,bengio2013better}. 
In fact, translating deep features along certain directions is shown to be corresponding to performing meaningful semantic transformations on the input images. For example, deep feature interpolation \cite{Upchurch2017DeepFI} leverages simple interpolations of deep features from pre-trained neural networks to achieve semantic image transformations. Variational Autoencoder(VAE) and Generative Adversarial Network(GAN) based methods \cite{Choi2018StarGANUG, zhu2017unpaired, He2018AttGANFA} establish a latent representation corresponding to the abstractions of images, which can be manipulated to edit the semantics of images. Generally, these methods reveal that certain directions in the deep feature space correspond to meaningful semantic transformations, and can be leveraged to perform semantic data augmentation.

\vspace{-2ex}
\section{Method}
\vspace{-2ex}

Deep networks are known to excel at forming high-level representations in the deep feature space \cite{He_2016_CVPR, 2016arXiv160806993H, Upchurch2017DeepFI, ren2015faster}, where the semantic relations between samples can be captured by the relative positions of their features \cite{bengio2013better}. Previous work has demonstrated that translating features towards specific directions corresponds to meaningful semantic transformations when the features are mapped to the input space \cite{Upchurch2017DeepFI,Li2016ConvolutionalNF, bengio2013better}. Based on this observation, we propose to directly augment the training data in the feature space, and integrate this procedure into the training of deep models. 

The proposed implicit semantic data augmentation (ISDA) has two important components, i.e., online estimation of class-conditional covariance matrices and optimization with a robust loss function. The first component aims to find a distribution from which we can sample meaningful semantic transformation directions for data augmentation, while the second saves us from explicitly generating a large amount of extra training data, leading to remarkable efficiency compared to existing data augmentation techniques.

\vspace{-2ex}
\subsection{Semantic Transformations in Deep Feature Space}
\vspace{-2ex}

As aforementioned, certain directions in the deep feature space correspond to meaningful semantic transformations like ``make-bespectacled'' or `change-view-angle'. This motivates us to augment the training set by applying such semantic transformations on deep features. However, manually searching for semantic directions is infeasible for large scale problems. To address this problem, we propose to approximate the procedure by sampling random vectors from a normal distribution with zero mean and a covariance that is proportional to the intra-class covariance matrix, which captures the variance of samples in that class and is thus likely to contain rich semantic information. Intuitively, features for the \emph{person} class may vary along the ``wear-glasses'' direction, while having nearly zero variance along the ``has-propeller'' direction which only occurs for other classes like the \emph{plane} class. We hope that directions corresponding to meaningful transformations for each class are well represented by the principal components of the covariance matrix of that class.

Consider training a deep network $G$ with weights $\bm{\Theta}$ on a training set 
$\mathcal{D} = \{(\bm{x}_{i}, y_{i})\}_{i=1}^{N}$, where $y_{i} \in \{ 1, \ldots, C \}$ is the label of the $i$-th sample $\bm{x}_{i}$ over $C$ classes.  Let the $A$-dimensional vector $\bm{a}_{i} = [a_{i1}, \ldots, a_{iA}]^{T} = G(\bm{x}_{i}, \bm{\Theta})$ denote the deep features of $\bm{x}_{i}$ learned by $G$, and $a_{ij}$ indicate the $j$th element of $\bm{a}_{i}$. 


To obtain semantic directions to augment $\bm{a}_{i}$, we randomly sample vectors from a zero-mean multi-variate normal distribution $\mathcal{N}(0, \Sigma_{y_i})$, where $\Sigma_{y_i}$ is the class-conditional covariance matrix estimated from the features of all the samples in class $y_i$. In implementation, the covariance matrix is computed in an online fashion by aggregating statistics from all mini-batches. The online estimation algorithm is given in Section A in the supplementary.

During training, $C$ covariance matrices are computed, one for each class. The augmented feature $\tilde{\bm{a}}_{i}$ is obtained by translating $\bm{a}_{i}$ along a random direction sampled from $\mathcal{N}(0, \lambda\Sigma_{y_i})$. Equivalently, we have 
\begin{equation}
    \tilde{\bm{a}}_{i} \sim \mathcal{N}(\bm{a}_{i}, \lambda\Sigma_{y_i}),
\end{equation}
where $\lambda$ is a positive coefficient to control the strength of semantic data augmentation. As the covariances are computed dynamically during training, the estimation in the first few epochs are not quite informative when the network is not well trained. To address this issue, we let $\lambda  = (t/T)\!\times\!\lambda_0$ be a function of the current iteration $t$, thus to reduce the impact of the estimated covariances on our algorithm early in the training stage.

\vspace{-1.5ex}
\subsection{Implicit Semantic Data Augmentation (ISDA)}
\vspace{-1.5ex}
A naive method to implement ISDA is to explicitly augment each $\bm{{a}}_{i}$ for $M$ times, forming an augmented feature set $\{(\bm{a}_{i}^{1}, y_{i}), \ldots, (\bm{a}_{i}^{M}, y_{i})\}_{i=1}^{N}$ of size $MN$, where $\bm{a}_{i}^{k}$ is $k$-th copy of augmented features for sample $\bm{x}_i$. 
Then the networks are trained by minimizing the cross-entropy (CE) loss:
\begin{equation}
    \label{eq2}
    \mathcal{L}_{M}(\bm{W}, \bm{b}, \bm{\Theta}) = 
    \frac{1}{N}
    \sum_{i=1}^{N}\frac{1}{M}\sum_{k=1}^{M}
    -log (\frac{e^{\bm{w}^{T}_{y_{i}}\bm{a}_{i}^{k}+ b_{y_{i}}}}
    {\sum_{j=1}^{C}e^{\bm{w}^{T}_{j}\bm{a}_{i}^{k} + b_{j}}}),
\end{equation}
where $\bm{W} = [\bm{w}_{1},\dots, \bm{w}_{C}]^{T} \in \mathcal{R}^{C \times A}$ and $\bm{b} = [b_{1},\dots, b_{C}]^T \in \mathcal{R}^C$ are the weight matrix and biases corresponding to the final fully connected layer, respectively. 

Obviously, the naive implementation is computationally inefficient when $M$ is large, as the feature set is enlarged by $M$ times. In the following, we consider the case that $M$ grows to infinity, and find that an easy-to-compute upper bound can be derived for the loss function, leading to a highly efficient implementation.



\textbf{Upper bound of the loss function. }
In the case $M\rightarrow\infty$, we are in fact considering the expectation of the CE loss under all possible augmented features. Specifically, $\mathcal{L}_{\infty}$ is given by:
\begin{equation}
    \label{expectation}
    \mathcal{L}_{\infty}(\bm{W}, \bm{b}, \bm{\Theta}|\bm{\Sigma})
= \frac{1}{N} \sum_{i=1}^{N}\mathrm{E}_{\tilde{\bm{a}}_{i}}[
    -log(
        \frac{e^{\bm{w}^{T}_{y_{i}}\tilde{\bm{a}}_{i}+ b_{y_{i}}}}
    {\sum_{j=1}^{C}e^{\bm{w}^{T}_{j}\tilde{\bm{a}}_{i} + b_{j}}}
    )].
\end{equation}
If $\mathcal{L}_{\infty}$ can be computed efficiently, then we can directly minimize it without explicitly sampling augmented features. However, Eq. (\ref{expectation}) is difficult to compute in its exact form. Alternatively, we find that it is possible to derive an easy-to-compute upper bound for $\mathcal{L}_{\infty}$, as given by the following proposition.


\begin{proposition}
    \label{proposition}
Suppose that $\tilde{\bm{a}}_{i} \sim \mathcal{N}(\bm{a}_{i}, \lambda\Sigma_{y_i})$, then we have an upper bound of $\mathcal{L}_{\infty}$, given by
\begin{equation}
    \label{proposition_1}
    {\mathcal{L}}_{\infty}
    (\bm{W}, \bm{b}, \bm{\Theta}|\bm{\Sigma})
     \leq \frac{1}{N} \sum_{i=1}^{N} - log( \frac{e^{
         \bm{w}^{T}_{y_{i}}\bm{a}_{i} + b_{y_{i}}
     }}{\sum_{j=1}^{C}e^{
        \bm{w}^{T}_{j}\bm{a}_{i} + b_{j} + \frac{\lambda}{2}(\bm{w}^{T}_{j} - \bm{w}^{T}_{y_{i}})\Sigma_{y_i}(\bm{w}_{j} - \bm{w}_{y_{i}})
     }})  \triangleq \overline{\mathcal{L}}_{\infty}.
\end{equation}
\end{proposition}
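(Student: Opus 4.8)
The plan is to rewrite the expected cross-entropy loss in a form where Jensen's inequality applies, and then evaluate the resulting expectation in closed form using the Gaussian moment generating function. First I would rewrite the per-sample term inside the expectation by dividing numerator and denominator by $e^{\bm{w}_{y_i}^T\tilde{\bm{a}}_i+b_{y_i}}$, obtaining
\[
-\log\frac{e^{\bm{w}_{y_i}^T\tilde{\bm{a}}_i+b_{y_i}}}{\sum_{j=1}^{C}e^{\bm{w}_j^T\tilde{\bm{a}}_i+b_j}} = \log\sum_{j=1}^{C}e^{(\bm{w}_j-\bm{w}_{y_i})^T\tilde{\bm{a}}_i + (b_j-b_{y_i})}.
\]
Taking the expectation over $\tilde{\bm{a}}_i$ and applying Jensen's inequality to the concave function $\log(\cdot)$, i.e. $\mathrm{E}[\log X]\le\log\mathrm{E}[X]$, together with linearity of expectation to move $\mathrm{E}_{\tilde{\bm{a}}_i}$ inside the finite sum, yields
\[
\mathrm{E}_{\tilde{\bm{a}}_i}\Big[\log\sum_{j=1}^{C}e^{(\bm{w}_j-\bm{w}_{y_i})^T\tilde{\bm{a}}_i+(b_j-b_{y_i})}\Big] \leq \log\sum_{j=1}^{C}e^{b_j-b_{y_i}}\,\mathrm{E}_{\tilde{\bm{a}}_i}\big[e^{(\bm{w}_j-\bm{w}_{y_i})^T\tilde{\bm{a}}_i}\big].
\]

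Next I would evaluate each expectation explicitly. Since $\tilde{\bm{a}}_i\sim\mathcal{N}(\bm{a}_i,\lambda\Sigma_{y_i})$, the scalar random variable $Z_j = (\bm{w}_j-\bm{w}_{y_i})^T\tilde{\bm{a}}_i$ is Gaussian with mean $\mu_j=(\bm{w}_j-\bm{w}_{y_i})^T\bm{a}_i$ and variance $\sigma_j^2=\lambda(\bm{w}_j-\bm{w}_{y_i})^T\Sigma_{y_i}(\bm{w}_j-\bm{w}_{y_i})$. The moment generating function identity $\mathrm{E}[e^{Z}] = e^{\mu+\sigma^2/2}$ for $Z\sim\mathcal{N}(\mu,\sigma^2)$ then gives
\[
\mathrm{E}_{\tilde{\bm{a}}_i}\big[e^{(\bm{w}_j-\bm{w}_{y_i})^T\tilde{\bm{a}}_i}\big] = e^{(\bm{w}_j-\bm{w}_{y_i})^T\bm{a}_i + \frac{\lambda}{2}(\bm{w}_j-\bm{w}_{y_i})^T\Sigma_{y_i}(\bm{w}_j-\bm{w}_{y_i})}.
\]
Substituting this back, re-absorbing the $b_{y_i}$ and $\bm{w}_{y_i}^T\bm{a}_i$ terms into a single softmax-style fraction, and averaging over $i=1,\dots,N$ recovers exactly the right-hand side $\overline{\mathcal{L}}_\infty$ of Eq. (\ref{proposition_1}), which completes the argument.

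The main point requiring care — though it is more a matter of bookkeeping than difficulty — is getting the direction of Jensen's inequality right: because $\log$ is concave we have $\mathrm{E}[\log X]\le\log\mathrm{E}[X]$, which is precisely what produces an \emph{upper} bound, and the bound is finite since the Gaussian MGF is finite for every linear functional, so no integrability obstruction arises. Everything after the Jensen step is a routine computation with the Gaussian MGF and algebraic regrouping of the softmax terms, so I expect the entire proof to be short.
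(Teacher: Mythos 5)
Your proposal is correct and follows exactly the route of the paper's own proof: rewrite each per-sample term as $\log\sum_{j}e^{(\bm{w}_j-\bm{w}_{y_i})^T\tilde{\bm{a}}_i+(b_j-b_{y_i})}$, apply Jensen's inequality $\mathrm{E}[\log X]\leq\log\mathrm{E}[X]$ for the concave logarithm, and evaluate each remaining expectation via the Gaussian moment-generating function. No gaps; the bookkeeping of the $\bm{w}_{y_i}^T\bm{a}_i$ and $b_{y_i}$ terms back into the softmax fraction is exactly as you describe.
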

\begin{proof}
    According to the definition of $\mathcal{L}_{\infty}$ in (\ref{expectation}), we have:
\begin{shrinkeq}{-1ex}
\begin{align}
    \label{Js_1}
    \mathcal{L}_{\infty} (\bm{W}, \bm{b}, \bm{\Theta}|\bm{\Sigma})
    & =  \frac{1}{N} \sum_{i=1}^{N}\mathrm{E}_{\tilde{\bm{a}}_{i}}[
        log(
        \sum_{j=1}^{C}
    e^{(\bm{w}^{T}_{j} - \bm{w}^{T}_{y_{i}})\tilde{\bm{a}}_{i} + (b_{j} - b_{y_{i}})}
        )] \\
        \label{Js_3}
    & \leq \frac{1}{N}
    \sum_{i=1}^{N}
    log(\sum_{j=1}^{C}\mathrm{E}_{\tilde{\bm{a}}_{i}}
    [e^{(\bm{w}^{T}_{j} - \bm{w}^{T}_{y_{i}})\tilde{\bm{a}}_{i} + (b_{j} - b_{y_{i}})}]) \\
    \label{Js_4}
    & = \frac{1}{N}
    \sum_{i=1}^{N}
    log(\sum_{j=1}^{C}
    e^{(\bm{w}^{T}_{j} - \bm{w}^{T}_{y_{i}})\bm{a}_{i} + (b_{j} - b_{y_{i}})
        + \frac{\lambda}{2}(\bm{w}^{T}_{j} - \bm{w}^{T}_{y_{i}})\Sigma_{y_i}(\bm{w}_{j} - \bm{w}_{y_{i}})}
    )\\
    \label{Js_5}
    & = \overline{\mathcal{L}}_{\infty}.
\end{align}
\end{shrinkeq}
In the above, the Inequality (\ref{Js_3}) follows from the Jensen's inequality $\mathrm{E}[logX] \leq log\mathrm{E}[X]$, as the logarithmic function $log(\cdot)$ is concave. The Eq. (\ref{Js_4}) is obtained by leveraging the moment-generating function:
\begin{equation*}
    \mathrm{E}[e^{tX}] = e^{t\mu + \frac{1}{2} \sigma^2 t^2},\ \ X \sim \mathcal{N}(\mu,\sigma^2),
\end{equation*}
due to the fact that $(\bm{w}^{T}_{j}\!-\!\bm{w}^{T}_{y_{i}})\tilde{\bm{a}}_{i}\!+\!(b_{j}\!-\!b_{y_{i}})$ is a Gaussian random variable, i.e.,
\begin{equation*}
    \label{proof_1}
    (\bm{w}^{T}_{j}\!-\!\bm{w}^{T}_{y_{i}})\tilde{\bm{a}}_{i}\!+\!(b_{j}\!-\!b_{y_{i}}) 
    \sim \mathcal{N}\left((\bm{w}^{T}_{j}\!-\!\bm{w}^{T}_{y_{i}})\bm{a}_{i}\!+\!(b_{j}\!-\!b_{y_{i}}),
    \lambda(\bm{w}^{T}_{j}\!-\!\bm{w}^{T}_{y_{i}})\Sigma_{y_i}(\bm{w}_{j}\!-\!\bm{w}_{y_{i}})\right). \qedhere
\end{equation*}

\end{proof} 
\vspace{-1ex}

\begin{wrapfigure}{r}{7cm}
    \begin{minipage}{0.5\columnwidth}
        \vspace{-1ex}
        \begin{center}
            \vskip -0.2in
                \begin{algorithm}[H]
                    \caption{The ISDA Algorithm.}
                    \label{alg}
                \begin{algorithmic}[1]
                    \STATE {\bfseries Input:} $\mathcal{D}$, $\lambda_0$
                    \STATE Randomly initialize
                    $\bm{W}, \bm{b}$ and $\bm{\Theta}$ 
                    \FOR{$t=0$ {\bfseries to} $T$}
                    \STATE Sample a mini-batch $\{ \bm{x}_i, {y_i} \}_{i=1}^B$  from $\mathcal{D}$
                    \STATE Compute $\bm{a}_{i} = G(\bm{x}_{i}, \bm{\Theta})$
                    \STATE Estimate the covariance matrices $\Sigma_{1}$, $\Sigma_{2}$, $...$, $\Sigma_{C}$
                    \STATE Compute $\overline{\mathcal{L}}_{\infty}$
                    according to Eq. (\ref{proposition_1})
                    \STATE Update $\bm{W}, \bm{b}$, $\bm{\Theta}$ with SGD
                    \ENDFOR
                    \STATE {\bfseries Output:} $\bm{W}, \bm{b}$ and $\bm{\Theta}$
                    
                \end{algorithmic}
                \end{algorithm}
                \vskip -0.2in
                
        \end{center}
    \vspace{-8ex}
    \end{minipage}
\end{wrapfigure}

Essentially, Proposition \ref{proposition} provides a surrogate loss for our implicit data augmentation algorithm. Instead of minimizing the exact loss function $\mathcal{L}_{\infty}$, we can optimize its upper bound $\overline{\mathcal{L}}_{\infty}$ in a much more efficient way. Therefore, the proposed ISDA boils down to a novel robust loss function, which can be easily adopted by most deep models.
In addition, we can observe that when $\lambda \rightarrow 0$, which means no features are augmented,
$\overline{\mathcal{L}}_{\infty}$ reduces to the standard CE loss.

In summary, the proposed ISDA can be simply plugged into deep networks as a robust loss function, and efficiently optimized with the stochastic gradient descent (SGD) algorithm. We present the pseudo code of ISDA in Algorithm \ref{alg}. Details of estimating covariance matrices and computing gradients are presented in Appendix A.

\vspace{-2ex}
\section{Experiments}
\vspace{-2ex}


In this section, we empirically validate the proposed algorithm on several widely used image classification benchmarks, i.e., CIFAR-10, CIFAR-100 \cite{krizhevsky2009learning} and ImageNet\cite{5206848}. We first evaluate the effectiveness of ISDA with different deep network architectures on these datasets. 
Second, we apply several recent proposed non-semantic image augmentation methods in addition to the standard baseline augmentation, and investigate the performance of ISDA.
Third, we present comparisons with state-of-the-art robust lost functions and generator-based semantic data augmentation algorithms. 
Finally, ablation studies are conducted to examine the effectiveness of each component. We also visualize the augmented samples in the original input space with the aid of a generative network.

\vspace{-2ex}
\subsection{Datasets and Baselines}
\vspace{-2ex}
\label{sec:dataset-baseline}
\textbf{Datasets.} We use three image recognition benchmarks in the experiments. 
(1) The two CIFAR datasets consist of 32x32 colored natural images in 10 classes for CIFAR-10 and 100 classes for CIFAR-100, with 50,000 images for training and 10,000 images for testing, respectively. In our experiments, we hold out
5000 images from the training set as the validation set to search for the hyper-parameter $\lambda_0$. These samples are also used for training after an optimal $\lambda_0$ is selected, and the results on the test set are reported. Images are normalized with channel means and standard deviations for pre-procession. For the non-semantic data augmentation of the training set, we follow the standard operation in \cite{Howard2014SomeIO}: 4 pixels are padded at each side of the image, followed by a random 32x32 cropping combined with random horizontal flipping. 
(2) ImageNet is a 1,000-class dataset from ILSVRC2012\cite{5206848}, providing 1.2 million images for training and 50,000 images for validation. We adopt the same augmentation configurations in \cite{krizhevsky2012imagenet,He_2016_CVPR,2016arXiv160806993H}.

\textbf{Non-semantic augmentation techniques. }
To study the complementary effects of ISDA to traditional data augmentation methods, two state-of-the-art non-semantic augmentation techniques are applied, with and without ISDA. 
(1) \textit{Cutout} \cite{devries2017improved} randomly masks out square regions of input during training to regularize the model. 
(2) \textit{AutoAugment} \cite{cubuk2018autoaugment} automatically searches for the best augmentation policies to yield the highest validation accuracy on a target dataset. All hyper-parameters are the same as reported in the papers introducing them.

\begin{table*}[t]
    \footnotesize
    \centering

    \caption{Single crop error rates (\%) of different deep networks on the validation set of ImageNet. We report the results of our implementation with and without ISDA. The better results are \textbf{bold-faced}, while the numbers in brackets denote the performance improvement achieved by ISDA. We also report the theoretical computational overhead and the additional training time introduced by ISDA in the last two columns, which is obtained with 8 Tesla V100 GPUs.
    }
    \label{ImageNet Results}
    \setlength{\tabcolsep}{1.8mm}{
    \vspace{5pt}
    \renewcommand\arraystretch{1.15}
    \begin{tabular}{c|c||c|c|c|c}
    \hline
    \multirow{2}{*}{Networks}  &  \multirow{2}{*}{Params} &  \multicolumn{2}{c|}{Top-1 / Top-5 Error Rates (\%)}  &  Additional Cost &   Additional Cost  \\
    \cline{3-4}
     &  & Baselines & ISDA & (Theoretical) &  (Wall Time) \\
    \hline
    ResNet-50 \cite{He_2016_CVPR} & 25.6M  & 23.0 / 6.8  & \textbf{21.9$_{(1.1)}$ / 6.3} & 0.25\% &7.6\%\\
    ResNet-101 \cite{He_2016_CVPR} & 44.6M  & 21.7 / 6.1  & \textbf{20.8$_{(0.9)}$ / 5.7}  & 0.13\% & 7.4\%\\
    ResNet-152 \cite{He_2016_CVPR} & 60.3M & 21.3 / 5.8 &  \textbf{20.3$_{(1.0)}$ / 5.5} & 0.09\% &5.4\%\\
    \hline
    DenseNet-BC-121 \cite{2016arXiv160806993H} & 8.0M & 23.7 / 6.8 & \textbf{23.2$_{(0.5)}$ / 6.6}  & 0.20\% &5.6\%\\
    DenseNet-BC-265 \cite{2016arXiv160806993H} & 33.3M & 21.9 / 6.1  &  \textbf{21.2$_{(0.7)}$ / 6.0} & 0.24\% &5.4\%\\
    \hline
    ResNeXt-50, 32x4d \cite{xie2017aggregated} & 25.0M & 22.5 / 6.4  & \textbf{21.3$_{(1.2)}$ / 5.9}   & 0.24\% &6.6\%\\
    ResNeXt-101, 32x8d \cite{xie2017aggregated} & 88.8M & 21.1 / 5.9  & \textbf{20.1$_{(1.0)}$ / 5.4} & 0.06\% &7.9\%\\
    \hline
    \end{tabular}}
    \vskip -0.1in
\end{table*}

\begin{table*}[t]
    \footnotesize
    \centering
    \caption{Evaluation of ISDA on CIFAR with different models. The average test error over the last 10 epochs is calculated in each experiment, and we report mean values and standard deviations in three independent experiments. The best results are \textbf{bold-faced}.}
    \label{Tab01}
    \setlength{\tabcolsep}{1.85mm}{
    \vspace{5pt}
    \renewcommand\arraystretch{1.15}
    \begin{tabular}{c|c||c|c}
    \hline
    Method  &  Params  &  CIFAR-10    &   CIFAR-100     \\
    \hline
    ResNet-32 \cite{He_2016_CVPR} & 0.5M & 7.39 $\pm$ 0.10\% &  31.20 $\pm$ 0.41\%\\
    ResNet-32 + ISDA & 0.5M &  \textbf{7.09 $\pm$ 0.12\%} & \textbf{30.27 $\pm$ 0.34\%}\\
    \hline
    ResNet-110 \cite{He_2016_CVPR} & 1.7M  & 6.76 $\pm$ 0.34\% &  28.67 $\pm$ 0.44\%\\
    ResNet-110 + ISDA & 1.7M & \textbf{6.33 $\pm$ 0.19\%} & \textbf{27.57 $\pm$ 0.46\%}\\
    \hline
    SE-ResNet-110 \cite{hu2018squeeze} & 1.7M & 6.14 $\pm$ 0.17\% &27.30 $\pm$ 0.03\%\\
    SE-ResNet-110 + ISDA & 1.7M & \textbf{5.96 $\pm$ 0.21\%} & \textbf{26.63 $\pm$ 0.21\%}\\
    \hline
    Wide-ResNet-16-8 \cite{Zagoruyko2016WideRN} & 11.0M & 4.25 $\pm$ 0.18\% & 20.24 $\pm$ 0.27\%\\
    Wide-ResNet-16-8 + ISDA & 11.0M &\textbf{4.04 $\pm$ 0.29\%} & \textbf{19.91 $\pm$ 0.21\%}\\
    \hline
    Wide-ResNet-28-10 \cite{Zagoruyko2016WideRN} & 36.5M & 3.82 $\pm$ 0.15\% &  18.53 $\pm$ 0.07\%\\
    Wide-ResNet-28-10 + ISDA & 36.5M &  \textbf{3.58 $\pm$ 0.15\%} &  \textbf{17.98 $\pm$ 0.15\%}\\
    \hline
    ResNeXt-29, 8x64d \cite{xie2017aggregated} & 34.4M & 3.86 $\pm$ 0.14\% &  18.16 $\pm$ 0.13\%\\
    ResNeXt-29, 8x64d + ISDA & 34.4M &  \textbf{3.67 $\pm$ 0.12\%} &  \textbf{17.43 $\pm$ 0.25\%}\\
    \hline
    DenseNet-BC-100-12 \cite{2016arXiv160806993H} & 0.8M & 4.90 $\pm$ 0.08\% &  22.61 $\pm$ 0.10\%\\
    DenseNet-BC-100-12 + ISDA & 0.8M &  \textbf{4.54 $\pm$ 0.07\%} & \textbf{22.10 $\pm$ 0.34\%}\\
    \hline
    DenseNet-BC-190-40 \cite{2016arXiv160806993H} & 25.6M & 3.52\% & 17.74\%\\
    DenseNet-BC-190-40 + ISDA & 25.6M &  \textbf{3.24\%} & \textbf{17.42\%}\\
    \hline
    \end{tabular}}
    \vskip -0.2in
\end{table*}

\begin{table*}[t]
    \footnotesize
    \centering
    \caption{Evaluation of ISDA with state-of-the-art \textit{non-semantic} augmentation techniques. `AA' refers to  AutoAugment \cite{cubuk2018autoaugment}. We report mean values and standard deviations in three independent experiments. The best results are \textbf{bold-faced}.}
    \label{complementary_result}
    \setlength{\tabcolsep}{0.5mm}{
    \vspace{5pt}
    \renewcommand\arraystretch{1.15}
    \begin{tabular}{c|c|cc|cc}
    \hline
    Dataset  &  Networks  &  Cutout \cite{devries2017improved}  &  Cutout + ISDA  &   AA \cite{cubuk2018autoaugment} &   AA + ISDA  \\
    \hline
    \multirow{3}{*}{CIFAR-10} & Wide-ResNet-28-10 \cite{Zagoruyko2016WideRN} & 2.99 $\pm$ 0.06\% & \textbf{2.83 $\pm$ 0.04\%} & 2.65 $\pm$ 0.07\%& \textbf{2.56 $\pm$ 0.01\%}\\
    &Shake-Shake (26, 2x32d) \cite{gastaldi2017shake} & 3.16 $\pm$ 0.09\%& \textbf{2.93 $\pm$ 0.03\%}&2.89 $\pm$ 0.09\% & \textbf{2.68 $\pm$ 0.12\%} \\
    &Shake-Shake (26, 2x112d) \cite{gastaldi2017shake} & 2.36\%& \textbf{2.25\%}& 2.01\%& \textbf{1.82\%}\\
    \hline

    \multirow{3}{*}{CIFAR-100} & Wide-ResNet-28-10 \cite{Zagoruyko2016WideRN} & 18.05 $\pm$ 0.25\% & \textbf{17.02 $\pm$ 0.11\%} & 16.60 $\pm$ 0.40\%& \textbf{15.62 $\pm$ 0.32\%}\\
    &Shake-Shake (26, 2x32d) \cite{gastaldi2017shake} & 18.92  $\pm$ 0.21\%& \textbf{18.17 $\pm$ 0.08 \%}&17.50 $\pm$ 0.19\% &\textbf{17.21 $\pm$ 0.33\%} \\
    &Shake-Shake (26, 2x112d) \cite{gastaldi2017shake} & 17.34  $\pm$ 0.28\%& \textbf{16.24 $\pm$ 0.20 \%}&15.21 $\pm$ 0.20\% &\textbf{13.87 $\pm$ 0.26\%} \\
    \hline

    \end{tabular}}
\end{table*}

\textbf{Baselines.} 
Our method is compared to several baselines including state-of-the-art robust loss functions and generator-based semantic data augmentation methods. 
(1) \textit{Dropout} \cite{Srivastava2014DropoutAS} is a widely used regularization approach which randomly mutes some neurons during training.
(2) \textit{Large-margin softmax loss} \cite{liu2016large} introduces large decision margin, measured by a cosine distance, to the standard CE loss.
(3) \textit{Disturb label} \cite{Xie2016DisturbLabelRC} is a regularization mechanism that randomly replaces a fraction of labels with incorrect ones in each iteration.
(4) \textit{Focal loss} \cite{Lin2017FocalLF} focuses on a sparse set of hard examples to prevent easy samples from dominating the training procedure.
(5) \textit{Center loss} \cite{wen2016discriminative} simultaneously learns a center of features for each class and minimizes the distances between the deep features and their corresponding class centers. 
(6) \textit{$L_q$ loss} \cite{Zhang2018GeneralizedCE} is a noise-robust loss function, using the negative Box-Cox transformation. 
(7) For generator-based semantic augmentation methods, we train several state-of-the-art GANs \cite{arjovsky2017wasserstein, mirza2014conditional, odena2017conditional, chen2016infogan}, which are then used to generate extra training samples for data augmentation.
For fair comparison, all methods are implemented with the same training configurations when it is possible. Details for hyper-parameter settings are presented in Appendix B.

\textbf{Training details.}
For deep networks, we implement the ResNet, SE-ResNet, Wide-ResNet, ResNeXt and DenseNet on CIFAR, and ResNet, ResNeXt and DenseNet on ImageNet. Detailed configurations for these models are given in Appendix B. The hyper-parameter $\lambda_0$ for ISDA is selected from the set $\{0.1, 0.25, 0.5, 0.75, 1\}$ according to the performance on the validation set. On ImageNet, due to GPU memory limitation, we approximate the covariance matrices by their diagonals, i.e., the variance of each dimension of the features. The best hyper-parameter $\lambda_0$ is selected from $\{1, 2.5, 5, 7.5, 10\}$.

\vspace{-2ex}
\subsection{Main Results}
\vspace{-2ex}

Table \ref{ImageNet Results} presents the performance of ISDA on the large scale ImageNet dataset with state-of-the-art deep networks. It can be observed that ISDA significantly improves the generalization performance of these models. For example, the Top-1 error rate of ResNet-50 is reduced by $1.1\%$ via being trained with ISDA, approaching the performance of ResNet-101 ($21.9\%$ v.s. $21.7\%$) with $43\%$ less parameters. Similarly, the performance of ResNet-101+ISDA surpasses that of ResNet-152 with $26\%$ less parameters. Compared to ResNets, DenseNets generally suffer less from overfitting due to their architecture design, and thus appear to benefit less from our algorithm.

We report the error rates of several modern deep networks with and without ISDA on CIFAR-10/100 in Table \ref{Tab01}. Similar observations to ImageNet can be obtained. On CIFAR-100, for relatively small models like ResNet-32 and ResNet-110, ISDA reduces test errors by about $1\%$, while for larger models like Wide-ResNet-28-10 and ResNeXt-29, 8x64d, our method outperforms the competitive baselines by nearly $0.7\%$.


\begin{wrapfigure}{l}{7cm}
    \begin{minipage}[t]{\linewidth}
    \centering
    \includegraphics[width=\textwidth]{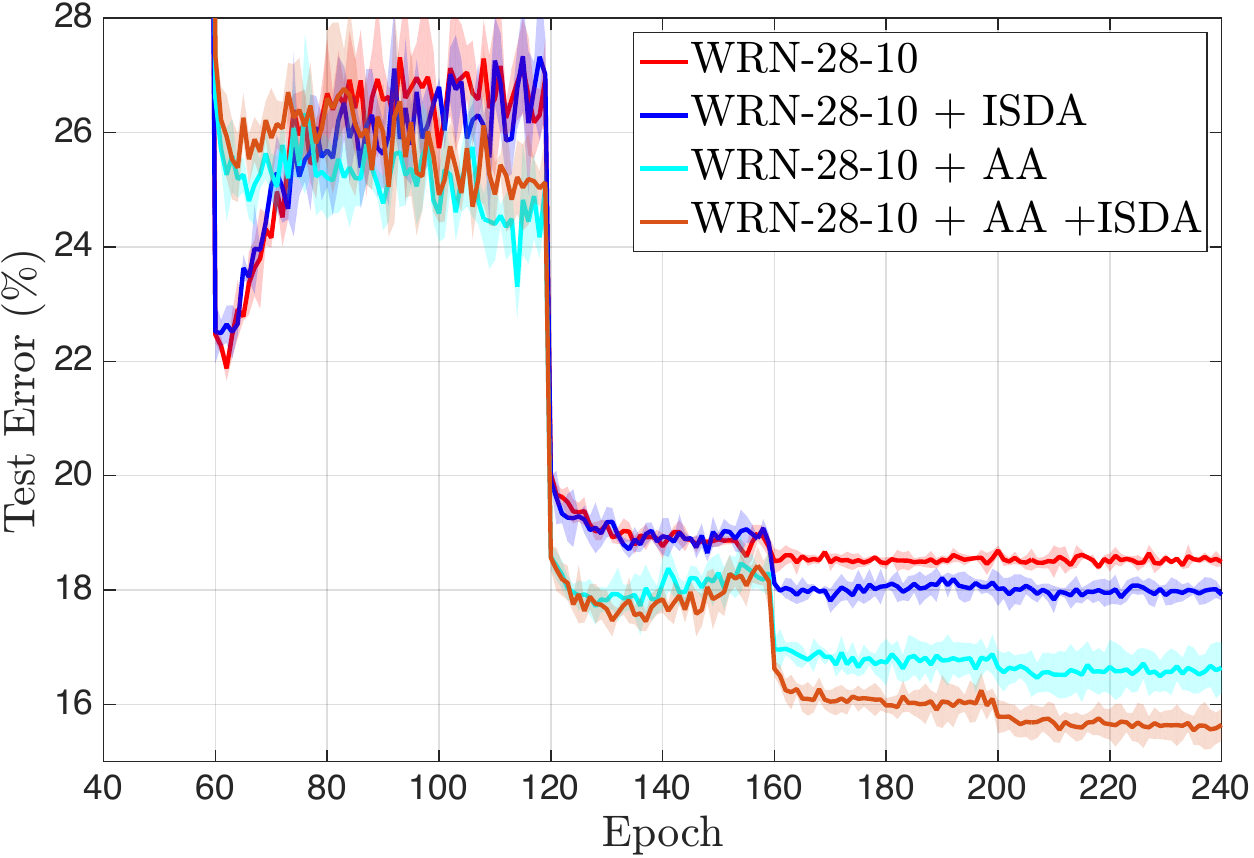}	
    \vskip -0.1in
    \caption{Curves of test errors on CIFAR-100 with Wide-ResNet (WRN). \label{bound_fig}}  
    \end{minipage}
\end{wrapfigure}
Table \ref{complementary_result} shows experimental results with recent proposed powerful traditional image augmentation methods (i.e. Cutout \cite{devries2017improved} and AutoAugment \cite{cubuk2018autoaugment}). Interestingly, ISDA seems to be even more effective when these techniques exist. For example, when applying AutoAugment, ISDA achieves performance gains of $1.34\%$ and $0.98\%$ on CIFAR-100 with the Shake-Shake (26, 2x112d) and the Wide-ResNet-28-10, respectively. Notice that these improvements are more significant than the standard situations. A plausible explanation for this phenomenon is that non-semantic augmentation methods help to learn a better feature representation, which makes semantic transformations in the deep feature space more reliable. The curves of test errors during training on CIFAR-100 with Wide-ResNet-28-10 are presented in Figure \ref{bound_fig}. It is clear that ISDA achieves a significant improvement after the third learning rate drop, and shows even better performance after the fourth drop.

\begin{table*}[t]
	\footnotesize
    \centering
    \vskip -0.15in
    \caption{Comparisons with the state-of-the-art methods. We report mean values and standard deviations of the test error in three independent experiments. Best results are \textbf{bold-faced}.}
    \label{Tab02}
    \setlength{\tabcolsep}{4mm}{
    \vspace{5pt}
    \renewcommand\arraystretch{1.15} 
    \begin{tabular}{l|cc|cc}
        \hline
    \multirow{2}{*}{Method} & \multicolumn{2}{c|}{ResNet-110} & \multicolumn{2}{c}{
        Wide-ResNet-28-10}\\
    &  CIFAR-10  &  CIFAR-100  &  CIFAR-10  &  CIFAR-100\\
    \hline
    Large Margin \cite{liu2016large} & 6.46$\pm$0.20\% & 28.00$\pm$0.09\% & 3.69$\pm$0.10\% & 18.48$\pm$0.05\%\\
    Disturb Label \cite{Xie2016DisturbLabelRC} & 6.61$\pm$0.04\% & 28.46$\pm$0.32\% & 3.91$\pm$0.10\%& 18.56$\pm$0.22\%\\
    Focal Loss \cite{Lin2017FocalLF} & 6.68$\pm$0.22\% & 28.28$\pm$0.32\% & 3.62$\pm$0.07\% & 18.22$\pm$0.08\%\\
    Center Loss \cite{wen2016discriminative} & 6.38$\pm$0.20\% & 27.85$\pm$0.10\% & 3.76$\pm$0.05\% & {18.50$\pm$0.25\%}\\
    L$_q$ Loss \cite{Zhang2018GeneralizedCE} & 6.69$\pm$0.07\% & 28.78$\pm$0.35\% & 3.78$\pm$0.08\% & 18.43$\pm$0.37\%\\
    \hline
    WGAN \cite{arjovsky2017wasserstein} & 6.63$\pm$0.23\% & - & 3.81$\pm$0.08\% & -\\
    CGAN \cite{mirza2014conditional} & 6.56$\pm$0.14\% & 28.25$\pm$0.36\% & 3.84$\pm$0.07\% & 18.79$\pm$0.08\%\\
    ACGAN \cite{odena2017conditional} & 6.32$\pm$0.12\% & 28.48$\pm$0.44\% & 3.81$\pm$0.11\% & 18.54$\pm$0.05\%\\
    infoGAN \cite{chen2016infogan} & 6.59$\pm$0.12\% & 27.64$\pm$0.14\% & 3.81$\pm$0.05\% & 18.44$\pm$0.10\%\\
    \hline
    Basic & 6.76$\pm$0.34\% & 28.67$\pm$0.44\% & - & -\\
    Basic + Dropout & {6.23$\pm$0.11\%} & {27.11$\pm$0.06\%} & 3.82$\pm$0.15\% & 18.53$\pm$0.07\%\\
    ISDA & 6.33$\pm$0.19\% & 27.57$\pm$0.46\% & - & -\\
    ISDA + Dropout & \textbf{5.98$\pm$0.20\%} & \textbf{26.35$\pm$0.30\%} & \textbf{3.58$\pm$0.15\%} & \textbf{17.98$\pm$0.15\%}\\
    \hline
    \end{tabular}}
\end{table*}

\vspace{-2ex}
\subsection{Comparison with Other Approaches}
\vspace{-2ex}

We compare ISDA with a number of competitive baselines described in Section \ref{sec:dataset-baseline}, ranging from robust loss functions to semantic data augmentation algorithms based on generative models.
The results are summarized in Table \ref{Tab02}, and the training curves are presented in Appendix D.
One can observe that ISDA compares favorably with all the competitive baseline algorithms.
With ResNet-110, the test errors of other robust loss functions are 6.38\% and 27.85\% on CIFAR-10 and CIFAR-100, respectively, while ISDA achieves 6.23\% and 27.11\%, respectively. 

Among all GAN-based semantic augmentation methods, ACGAN gives the best performance, especially on CIFAR-10. However, these models generally suffer a performance reduction on CIFAR-100, which do not contain enough samples to learn a valid generator for each class. In contrast, ISDA shows consistent improvements on all the datasets. In addition, GAN-based methods require additional computation to train the generators, and introduce significant overhead to the training process. In comparison, ISDA not only leads to lower generalization error, but is simpler and more efficient.

\vspace{-2ex}
\subsection{Visualization Results}
\vspace{-2ex}

To demonstrate that our method is able to generate meaningful semantically augmented samples, we introduce an approach to map the augmented features back to the pixel space to explicitly show semantic changes of the images. Due to space limitations, we defer the detailed introduction of the mapping algorithm and present it in Appendix C. 

Figure \ref{Visual} shows the visualization results. The first and second columns represent the original images and reconstructed images without any augmentation. The rest columns present the augmented images by the proposed ISDA. It can be observed that ISDA is able to alter the semantics of images, e.g., backgrounds, visual angles, colors and type of cars, color of skins, which is not possible for traditional data augmentation techniques.

\begin{figure*}
    \begin{center}
    \includegraphics[width=\columnwidth]{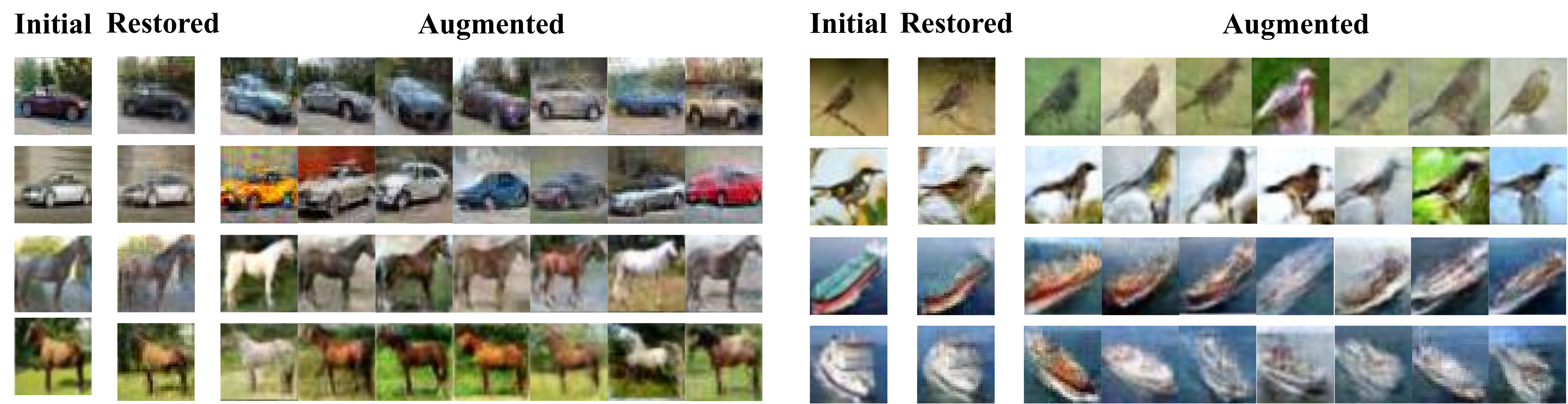}
    \vskip -0.05in
    \caption{Visualization results of semantically augmented images.}
    \label{Visual}
    \end{center}
    \vskip -0.3in
\end{figure*}




\vspace{-2ex}
\subsection{Ablation Study}
\vspace{-2ex}
\begin{wraptable}{r}{8.1cm}
	\small
    \centering
    \vskip -0.3in
    \caption{The ablation study for ISDA.}
    \label{Tab05}
    \setlength{\tabcolsep}{2.5mm}{
    \vspace{5pt}
    \renewcommand\arraystretch{1.1} 
    \begin{tabular}{l|c|c}
        \hline
            Setting & CIFAR-10 & CIFAR-100\\
            \hline
            Basic & 3.82$\pm$0.15\% & 18.58$\pm$0.10\%\\
            \hline
            Identity matrix & 3.63$\pm$0.12\% & 18.53$\pm$0.02\%\\
            Diagonal matrix & 3.70$\pm$0.15\% & 18.23$\pm$0.02\%\\
            Single covariance matrix & 3.67$\pm$0.07\% & 18.29$\pm$0.13\%\\
            Constant $\lambda_0$ & 3.69$\pm$0.08\% & 18.33$\pm$0.16\%\\
            \hline
            ISDA & \textbf{3.58$\pm$0.15\%} & \textbf{17.98$\pm$0.15\%}\\        
        \hline
    \end{tabular}}
    \vspace{-2ex}
\end{wraptable}

To get a better understanding of the effectiveness of different components in ISDA, we conduct a series of ablation studies. In specific, several variants are considered: 
(1) \textit{Identity matrix} means replacing the covariance matrix $\Sigma_c$ by the identity matrix. 
(2) \textit{Diagonal matrix} means using only the diagonal elements of the covariance matrix $\Sigma_c$.
(3) \textit{Single covariance matrix} means using a global covariance matrix computed from the features of all classes.
(4) \textit{Constant $\lambda_0$} means using a constant $\lambda_0$ without setting it as a function of the training iterations. 

Table \ref{Tab05} presents the ablation results. Adopting the identity matrix increases the test error by 0.05\% on CIFAR-10 and nearly 0.56\% on CIFAR-100. Using a single covariance matrix greatly degrades the generalization performance as well. The reason is likely to be that both of them fail to find proper directions in the deep feature space to perform meaningful semantic transformations. Adopting a diagonal matrix also hurts the performance as it does not consider correlations of features.

\vspace{-2ex}
\section{Conclusion}
\vspace{-2ex}
In this paper, we proposed an efficient implicit semantic data augmentation algorithm (ISDA) to complement existing data augmentation techniques. Different from existing approaches leveraging generative models to augment the training set with semantically transformed samples, our approach is considerably more efficient and easier to implement. In fact, we showed that ISDA can be formulated as a novel robust loss function, which is compatible with any deep network with the cross-entropy loss. Extensive results on several competitive image classification datasets demonstrate the effectiveness and efficiency of the proposed algorithm.

\section*{Acknowledgments}
Gao Huang is supported in part by Beijing Academy of Artificial Intelligence (BAAI) under grant BAAI2019QN0106 and Tencent AI Lab Rhino-Bird Focused Research Program under grant JR201914.

\bibliographystyle{IEEEtran}
\bibliography{IEEEabrv,IEEEtran}

\appendix

\section*{Appendix}

\section{Implementation Details of ISDA. }
\label{grad}
\textbf{Dynamic estimation of covariance matrices.}
During the training process using $\overline{\mathcal{L}}_{\infty}$, covariance matrices are estimated by:
\begin{equation}
    \label{ave}
    \bm{\mu}_j^{(t)} = \frac{n_j^{(t-1)}\bm{\mu}_j^{(t-1)} + m_j^{(t)} {\bm{\mu}'}_j^{(t)}}
    {n_j^{(t-1)} +m_j^{(t)}},
\end{equation}
\begin{equation}
    \label{cv}
    \begin{split}
        \Sigma_j^{(t)} 
         = \frac{n_j^{(t-1)}\Sigma_j^{(t-1)} + m_j^{(t)} {\Sigma'}_j^{(t)}}
        {n_j^{(t-1)} +m_j^{(t)}} 
         + \frac{n_j^{(t-1)}m_j^{(t)} (\bm{\mu}_j^{(t-1)} - {\bm{\mu}'}_j^{(t)})
        (\bm{\mu}_j^{(t-1)} - {\bm{\mu}'}_j^{(t)})^T}
        {(n_j^{(t-1)} +m_j^{(t)})^2},
    \end{split}
\end{equation}
\begin{equation}
    \label{sum}
    n_j^{(t)} = n_j^{(t-1)} + m_j^{(t)}
\end{equation}
where $\bm{\mu}_j^{(t)}$ and $\Sigma_j^{(t)}$ are the estimates of average values and covariance matrices of the features of $j^{th}$ class at $t^{th}$ step. ${\bm{\mu}'}_j^{(t)}$ and ${\Sigma'}_j^{(t)}$ are the average values and covariance matrices of the features of $j^{th}$ class in $t^{th}$ mini-batch. $n_j^{(t)}$ denotes the total number of training samples belonging to $j^{th}$ class in all $t$ mini-batches,
and $m_j^{(t)}$ denotes the number of training samples belonging to $j^{th}$ class only in $t^{th}$ mini-batch. 


\textbf{Gradient computation.} In backward propagation, gradients of $\overline{\mathcal{L}}_{\infty}$ are given by:
\begin{equation}
    \label{g_1}
    \frac{\partial{\overline{\mathcal{L}}_{\infty}}}{\partial b_j} =
    \frac{\partial{\overline{\mathcal{L}}_{\infty}}}{\partial z_j} = 
    \begin{cases}
        \frac{e^{z_{y_i}}}{\sum_{j=1}^{C}e^{z_{j}}}-1, &j = y_i \\
        \frac{e^{z_{j}}}{\sum_{j=1}^{C}e^{z_{j}}}, &j \neq y_i 
    \end{cases},
\end{equation}
\begin{equation}
    \label{g_2}
    \frac{\partial{\overline{\mathcal{L}}_{\infty}}}{\partial \bm{w}^{T}_{j}} = 
    \begin{cases}
        (\bm{a}_{i} + \sum_{n=1}^{C}[\lambda(\bm{w}^{T}_{n} - \bm{w}^{T}_{y_{i}})\Sigma_{i}])
        \frac{\partial{\overline{\mathcal{L}}_{\infty}}}{\partial z_j}, &j = y_i \\
        (\bm{a}_{i} + \lambda(\bm{w}^{T}_{j} - \bm{w}^{T}_{y_{i}})\Sigma_{i})
        \frac{\partial{\overline{\mathcal{L}}_{\infty}}}{\partial z_j},  &j \neq y_i 
    \end{cases},
\end{equation}
\begin{equation}
    \label{g_3}
    \frac{\partial{\overline{\mathcal{L}}_{\infty}}}{\partial a_k} = \sum_{j=1}^{C} 
    w_{jk} \frac{\partial{\overline{\mathcal{L}}_{\infty}}}{\partial z_j}, 1 \leq k \leq A,
\end{equation}
where $w_{jk}$ denotes $k^{th}$ element of $\bm{w}_{j}$. ${\partial{\overline{\mathcal{L}}_{\infty}}}/{\partial \bm{\Theta}}$ can be obtained through the backward propagation algorithm using ${\partial{\overline{\mathcal{L}}_{\infty}}}/{\partial \bm{a}}$.

\section{Training Details}
On CIFAR, we implement the ResNet, SE-ResNet, Wide-ResNet, ResNeXt and DenseNet.
The SGD optimization algorithm with a Nesterov momentum is applied to train all models. Specific hyper-parameters for training are presented in Table \ref{Training_hp}.

\begin{table*}[h]
	\scriptsize
    \centering
    \vskip -0.2in
    \caption{Training configurations on CIFAR. `$l_r$' donates the learning rate.}
    \label{Training_hp}
    \setlength{\tabcolsep}{0.5mm}{
    \vspace{5pt}
    \renewcommand\arraystretch{1.15} 
    \begin{tabular}{c|c|c|c|c|c|c}
    \hline
    Network & Total Epochs & Batch Size & Weight Decay & Momentum & Initial $l_r$ & $l_r$ Schedule \\
    \hline
    ResNet & 160 & 128 & 1e-4 & 0.9 & 0.1 & Multiplied by 0.1 in $80^{th}$ and $120^{th}$ epoch. \\
    \hline
    SE-ResNet & 200 & 128 & 1e-4 & 0.9 & 0.1 & Multiplied by 0.1 in $80^{th}$, $120^{th}$ and $160^{th}$ epoch. \\
    \hline
    Wide-ResNet & 240 & 128 & 5e-4 & 0.9 & 0.1 & Multiplied by 0.2 in $60^{th}$, $120^{th}$, $160^{th}$ and $200^{th}$ epoch. \\
    \hline
    DenseNet-BC & 300 & 64 & 1e-4 & 0.9 & 0.1 & Multiplied by 0.1 in $150^{th}$, $200^{th}$ and $250^{th}$ epoch. \\
    \hline
    ResNeXt & 350 & 128 & 5e-4 & 0.9 & 0.05 & Multiplied by 0.1 in $150^{th}$, $225^{th}$ and $300^{th}$ epoch. \\
    \hline
    Shake Shake &\multirow{1}{*}{1800}&\multirow{1}{*}{64}&\multirow{1}{*}{1e-4}&\multirow{1}{*}{0.9}&\multirow{1}{*}{0.1}&\multirow{1}{*}{Cosine learning rate.} \\
    \hline
    \end{tabular}}
\end{table*}






On ImageNet, we train all models for 300 epochs using the same L2 weight decay and momentum as CIFAR. The initial learning rate is set to 0.2 and annealed with a cosine schedule. The size of mini-batch is set to 512. We adopt $\lambda_0=1$ for DenseNets and $\lambda_0=7.5$ for ResNets and ResNeXts, except for using $\lambda_0=5$ for ResNet-101.

All baselines are implemented with the same training configurations mentioned above.
Dropout rate is set as 0.3 for comparison if it is not applied in the basic model, following the instruction in \cite{Srivastava2014DropoutAS}. For noise rate in disturb label, 0.05 is adopted in Wide-ResNet-28-10 on both CIFAR-10 and CIFAR-100 datasets and ResNet-110 on CIFAR 10, while 0.1 is used for ResNet-110 on CIFAR 100. Focal Loss contains two hyper-parameters $\alpha$ and $\gamma$. Numerous combinations have been tested on the validation set and we ultimately choose $\alpha=0.5$ and $\gamma=1$ for all four experiments. 
For L$_q$ loss, although \cite{Zhang2018GeneralizedCE} states that $q=0.7$ achieves the best performance in most conditions, we suggest that $q=0.4$ is more suitable in our experiments, and therefore adopted.
For center loss, we find its performance is largely affected by the learning rate of the center loss module, therefore its initial learning rate is set as 0.5 for the best generalization performance.

For generator-based augmentation methods, we apply the GANs structures introduced in \cite{arjovsky2017wasserstein, mirza2014conditional, odena2017conditional, chen2016infogan} to train the generators. 
For WGAN, a generator is trained for each class in CIFAR-10 dataset. For CGAN, ACGAN and infoGAN, a single model is simply required to generate images of all classes. A 100 dimension noise drawn from a standard normal distribution is adopted as input, generating images corresponding to their label. Specially, infoGAN takes additional input with two dimensions, which represent specific attributes of the whole training set. Synthetic images are involved with a fixed ratio in every mini-batch. Based on the experiments on the validation set, the proportion of generalized images is set as $1/6$.

\section{Reversing Convolutional Networks}

To explicitly demonstrate the semantic changes generated by ISDA, we propose an algorithm to map deep features back to the pixel space. Some extra visualization results are shown in Figure \ref{Extra}.

An overview of the algorithm is presented in Figure \ref{Reversing}.
As there is no closed-form inverse function for convolutional networks like ResNet or DenseNet, the mapping algorithm acts in a similar way to \cite{mahendran2015understanding} and \cite{Upchurch2017DeepFI}, by fixing the model and adjusting inputs to find images corresponding to the given features. However, given that ISDA augments semantics of images in essence, we find it insignificant to directly optimize the inputs in the pixel space. Therefore, we add a fixed pre-trained generator $\mathcal{G}$, which is obtained through training a wasserstein GAN \cite{arjovsky2017wasserstein}, to produce images for the classification model, and optimize the inputs of the generator instead. This approach makes it possible to effectively reconstruct images with augmented semantics.

\begin{figure*}
    \begin{center}
    \centerline{\includegraphics[width=\columnwidth]{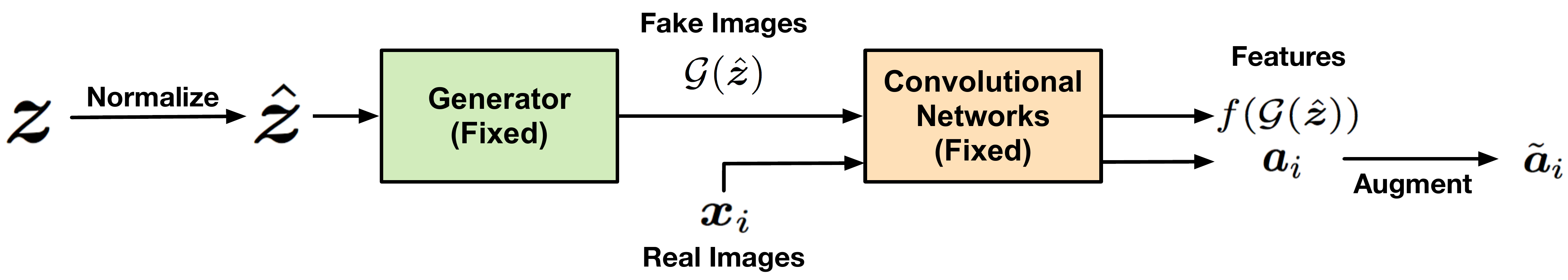}}
    \caption{Overview of the algorithm. We adopt a fixed generator $\mathcal{G}$ obtained by training a wasserstein gan to generate fake images for convolutional networks, and optimize the inputs of $\mathcal{G}$ in terms of the consistency in both the pixel space and the deep feature space.}
    \label{Reversing}
    \end{center}
    \vskip -0.2in
\end{figure*}

\begin{figure*}
    \begin{center}
    \centerline{\includegraphics[width=\columnwidth]{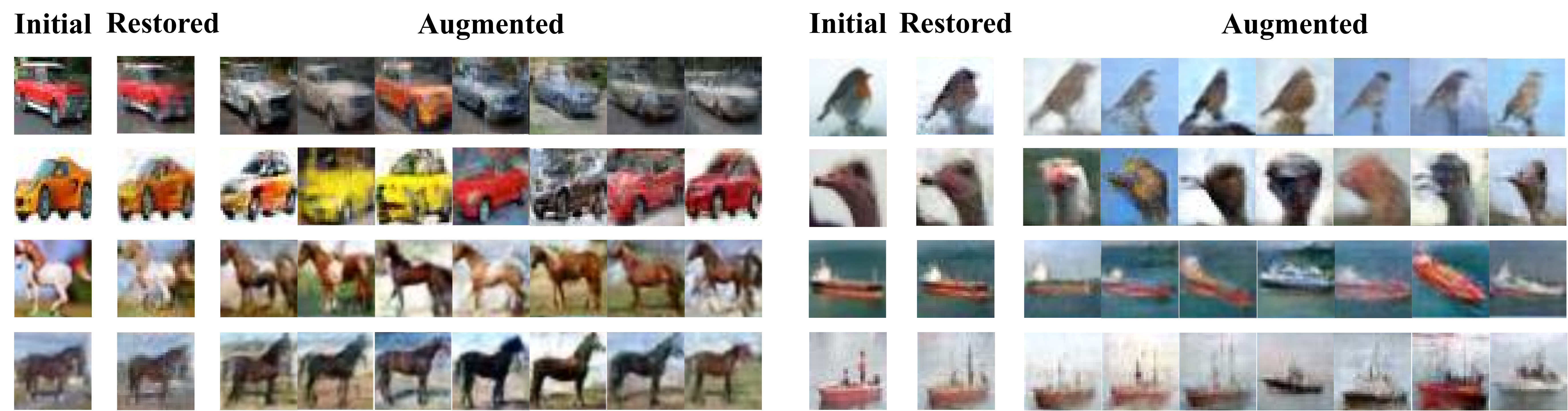}}
    \caption{Extra visualization results.}
    \label{Extra}
    \end{center}
    \vskip -0.3in
\end{figure*}

The mapping algorithm can be divided into two steps:

\textbf{Step I. }Assume a random variable $\bm{z}$ is normalized to $\hat{\bm{z}}$ and input to $\mathcal{G}$, generating fake image $\mathcal{G}(\hat{\bm{z}})$. $\bm{x}_{i}$ is a real image sampled from the dataset (such as CIFAR). $\mathcal{G}(\hat{\bm{z}})$ and $\bm{x}_{i}$ are forwarded through a pre-trained convolutional network to obtain deep feature vectors $f(\mathcal{G}(\hat{\bm{z}}))$ and $\bm{a}_{i}$. The first step of the algorithm is to find the input noise variable $\bm{z}_{i}$ corresponding to $\bm{x}_{i}$, namely 
\begin{equation}
    \label{ra1}
    \bm{z}_{i} = \arg\min_{\bm{z}} \|f(\mathcal{G}(\hat{\bm{z}})) - \bm{a}_{i}\|_{2}^{2} +
    \eta\|\mathcal{G}(\hat{\bm{z}}) - \bm{x}_{i}\|_{2}^{2},\
    s.t.\ \hat{\bm{z}} = \frac{\bm{z} - \overline{\bm{z}}}{std(\bm{z})},
\end{equation}
where $ \overline{\bm{z}}$ and $std(\bm{z})$ are the average value and the standard deviation of $\bm{z}$, respectively.
The consistency of both the pixel space and the deep feature space are considered in the loss function, and we introduce a hyper-parameter $\eta$ to adjust the relative importance of two objectives.

\textbf{Step II. }We augment $\bm{a}_{i}$ with ISDA, forming $\tilde{\bm{a}}_{i}$ and reconstructe it in the pixel space. Specifically, we search for $\bm{z}_{i}'$ corresponding to $\tilde{\bm{a}}_{i}$ in the deep feature space, with the start point $\bm{z}_{i}$ found in Step I:
\begin{equation}
    \label{ra2}
    \bm{z}_{i}' = \arg\min_{\bm{z'}} \|f(\mathcal{G}(\hat{\bm{z}}')) - \tilde{\bm{a}}_{i}\|_{2}^{2},\
    s.t.\ \hat{\bm{z}}' = \frac{\bm{z'} - \overline{\bm{z'}}}{std(\bm{z'})}.
\end{equation}
As the mean square error in the deep feature space is optimized to 0, $\mathcal{G}(\hat{\bm{z}_{i}}')$
is supposed to represent the image corresponding to $\tilde{\bm{a}}_{i}$.

The proposed algorithm is performed on a single batch. In practice, a ResNet-32 network is used as the convolutional network. We solve Eq. (\ref{ra1}), (\ref{ra2}) with a standard gradient descent (GD) algorithm of 10000 iterations. The initial learning rate is set as 10 and 1 for Step I and Step II respectively, and is divided by 10 every 2500 iterations. We apply a momentum of 0.9 and a l2 weight decay of 1e-4.

\section{Extra Experimental Results}

\begin{figure}[htp]
    \begin{center}
    \subfigure[ResNet-110 on CIFAR-10]{
        \label{fig:evaluationC10}
    \includegraphics[width=0.45\columnwidth]{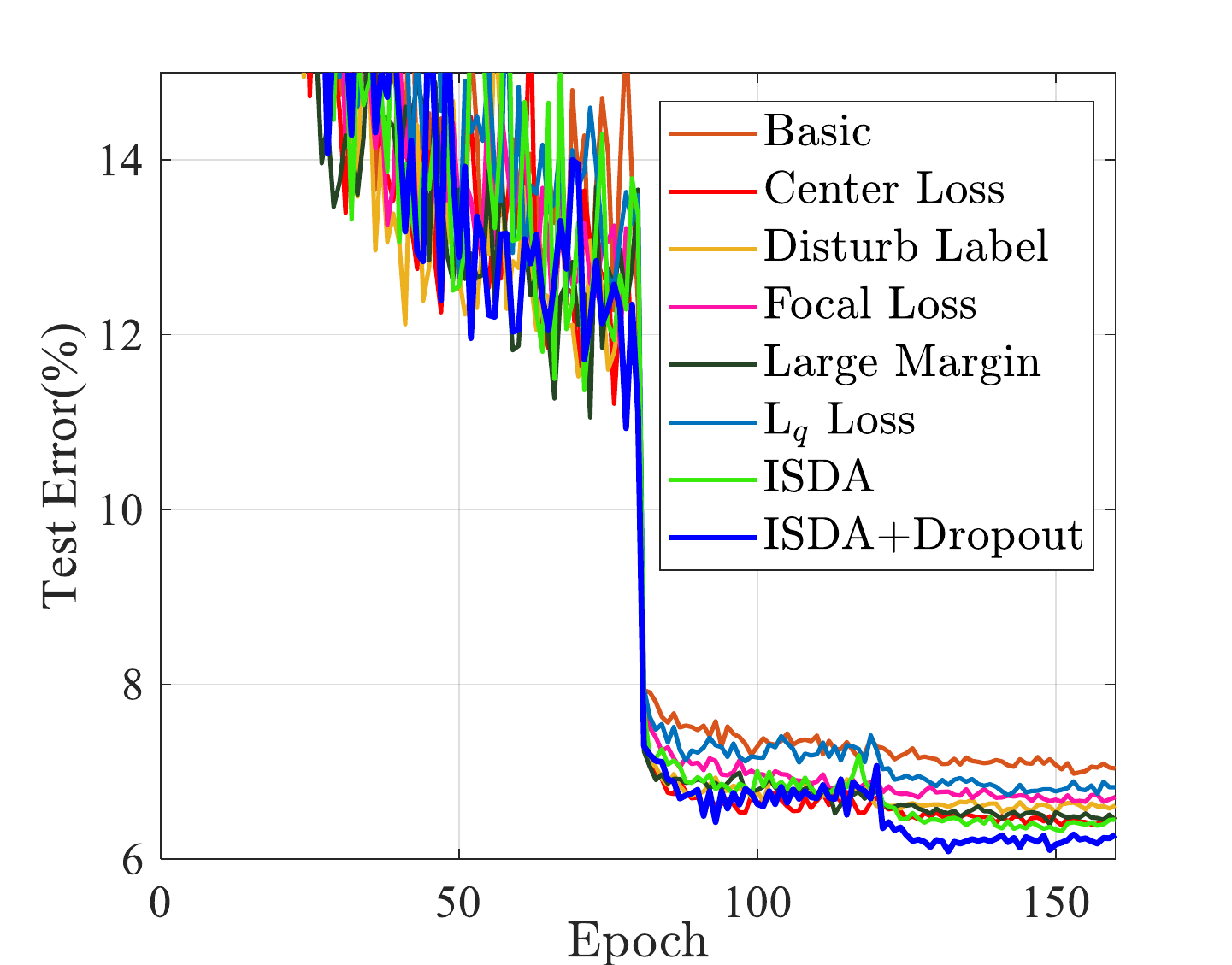}
    }
    \subfigure[ResNet-110 on CIFAR-100]{
        \label{fig:evluationC100}
    \includegraphics[width=0.45\columnwidth]{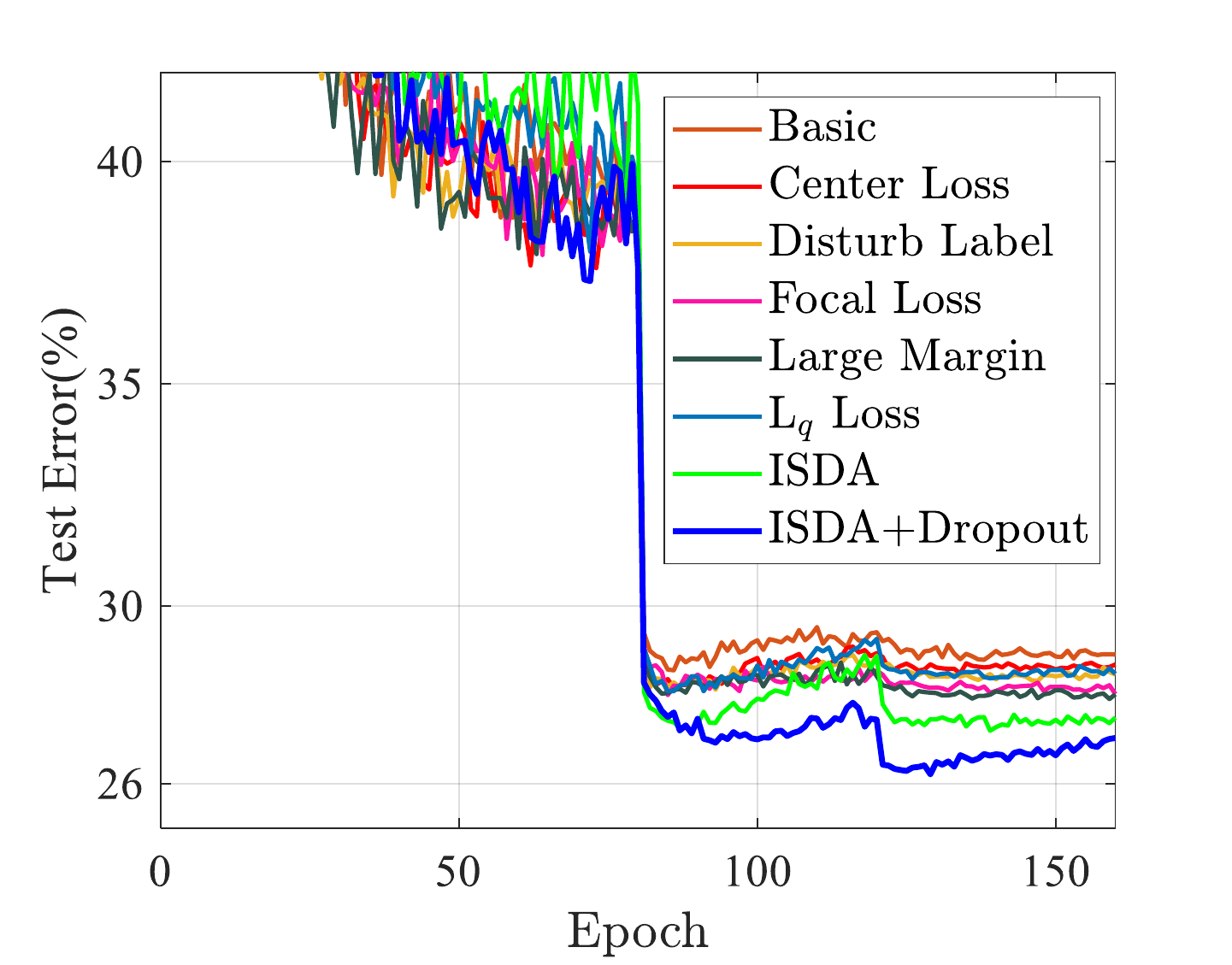}}
    \caption{Comparison with state-of-the-art image classification methods.}
    \label{compare}
    \end{center}
    \vskip -0.2in
\end{figure}

Curves of test errors of state-of-the-art methods and ISDA are presented in Figure \ref{compare}. ISDA outperforms other methods consistently, and shows the best generalization performance in all situations. Notably, ISDA decreases test errors more evidently in CIFAR-100, which demonstrates that our method is more suitable for datasets with fewer samples. This observation is consistent with the results in the paper. In addition, among other methods, center loss shows competitive performance with ISDA on CIFAR-10, but it fails to significantly enhance the generalization in CIFAR-100.

\end{document}